\newtheorem{lemma}{Lemma}
\newtheorem{corollary}{Corollary}
\begin{document}

\title{Prompt-based Dynamic Token Pruning for Efficient Segmentation of Medical Images}

\author{Pallabi Dutta \IEEEmembership{Student Member, IEEE}, Anubhab Maity and 
Sushmita Mitra \IEEEmembership{Fellow, IEEE} 
\thanks{This work was supported by the J. C. Bose National Fellowship of Sushmita Mitra, grant no. JCB/2020/000033. (\textit{Corresponding author: Pallabi Dutta})}
\thanks{Pallabi Dutta, Anubhab Maity and Sushmita Mitra are with Machine Intelligence Unit, Indian Statistical Institute, Kolkata 700108, INDIA (e-mail: duttapallabi2907@gmail.com; maityanubhab.ds.ai.expert@gmail.com; sushmita@isical.ac.in)}}



\maketitle

\begin{abstract}
The high computational demands of Vision Transformers (ViTs) in processing a large number of tokens often constrain their practical application in analyzing medical images. This research proposes a Prompt-driven Adaptive Token ({\it PrATo}) pruning method to selectively reduce the processing of irrelevant tokens in the segmentation pipeline. The prompt-based spatial prior helps to rank the tokens according to their relevance. Tokens with low-relevance scores are down-weighted, ensuring that only the relevant ones are propagated for processing across subsequent stages. This data-driven pruning strategy improves segmentation accuracy and inference speed by allocating computational resources to essential regions. The proposed framework is integrated with several state-of-the-art models to facilitate the elimination of irrelevant tokens, thereby enhancing computational efficiency while preserving segmentation accuracy. The experimental results show a reduction of $\sim$ 35-55\% tokens; thus reducing the computational costs relative to baselines. Cost-effective medical image processing, using our framework, facilitates real-time diagnosis by expanding its applicability in resource-constrained environments.
\end{abstract}

\begin{IEEEkeywords}
Box Prompt, Vision-Transformer, Pruning, Segmentation
\end{IEEEkeywords}

\section{Introduction}

\IEEEPARstart{P}recise segmentation of anatomical or pathological areas in medical images is essential for diagnosis, treatment planning, surgical navigation, and personalized medicine. It enables clinicians to measure disease burden, assess treatment response, and improve patient outcomes. Deep learning \cite{lecun15} has revolutionized medical image analysis by offering powerful algorithms to automate complex tasks such as segmentation. Its ability to learn intricate patterns from large datasets without much human intervention has made it a popular choice. Vision Transformers (ViTs) \cite{dosovitskiy2020image} have recently gained prominence in the development of algorithms for computer vision tasks, including medical image segmentation \cite{dutta2025wavelet}.

ViTs employ a self-attention mechanism to capture long-range dependencies in images, thus facilitating global context awareness. This is beneficial for segmenting anatomical structures that exhibit various shapes and sizes. The global view enables ViTs to model interactions between far-apart image regions, which can be difficult for convolutional networks with their restricted receptive fields \cite{yu2023marrying}. Several ViT-based segmentation approaches {\it viz.}, UNETR \cite{hatamizadeh2022unetr}, TransUNET \cite{chen2021transunet}, Swin-UNETR \cite{hatamizadeh2021swin} and SegFormer \cite{xie2021segformer} have reported encouraging performance in medical image segmentation, achieving state-of-the-art results across various benchmarks.

Despite impressive performance, ViTs face limitations in resource-constrained environments due to the quadratic computational complexity of self-attention. This scales poorly with increasing input image dimensions. Consequently, the deployment of ViT-based segmentation models in real-time clinical environments becomes challenging, limiting their applicability in practice. 

Model compression techniques, such as pruning, are an effective way to mitigate the computational expenses associated with ViTs. Pruning reduces the computational cost of a model by removing less significant elements from its architecture \cite{li2024a}. Many tokens processed by ViTs are semantically redundant \cite{yu2022width}. Token pruning increases efficiency by reducing the number of processed tokens. 

Static pruning \cite{lopez2024filter,sun2025channel,adnan2024structured} removes the fixed components of the model architecture, before inference, irreversibly eliminating the components considered less significant. For example, TRAM \cite{marchetti2025efficient} modeled the self-attention mechanism through an MLP layer, with tokens corresponding to the input nodes of the MLP. The association between different tokens was represented by the weights of the MLP. The relevance score of each token was computed recursively on the MLP representation of the self-attention mechanism. The tokens with the lowest scores were discarded in the early layers to ensure that only relevant tokens were forwarded across the network. Intrahead pruning \cite{zhang2025intra} eliminated certain rows of the weight matrices to produce query, key, and value vectors for the self-attention mechanism. An importance score was calculated for each row to eliminate irrelevant entries. A supplementary relationship matrix was preserved to address dependency conflicts with subsequent layers. Shunted Self-Attention \cite{ren2022shunted} performed token reduction by merging across different attention heads to efficiently learn multiscale representations.

Although effective in minimizing model size and computational expense, static pruning is less flexible \cite{he2023structured}. Its rigid nature limits the ability to adjust to the unique features of each input image. This constraint poses significant challenges for the analysis of medical images, where the variability between images can be considerable. 

In contrast, dynamic pruning \cite{liu2024efficient} adaptively adjusts the computation inference by selectively deactivating model components based on input. Methods like DynamicViT \cite{rao2021dynamicvit},\cite{lin2023the} and \cite{salam2025skin} incorporated lightweight modules to predict the relevance scores of tokens at each stage of the network. The irrelevant tokens with respect to the input image were discarded. Evo-Vit \cite{xu2022evo} dynamically identified tokens, as informative or placeholder, based on a global class attention score. The placeholder tokens represented irrelevant ones, which were retained simply to maintain the grid structure of the attention map. They were aggregated into a single representative value which was then processed by the self-attention mechanism to reduce computations. 

Other works focused on architectural modifications of ViTs to reduce the total number of tokens. TA-ASF \cite{chen2024ta} implemented a two-stage token pruning method. An importance score was assigned to each token, followed by dividing the entire set of tokens into high- and low-importance sets. A subset of tokens was sampled from both groups to preserve low-importance tokens that might be of global importance in the later layers of the network. Subsequently, the tokens in the subset were merged on the basis of their similarity. The soft top-K token pruning \cite{zhou2023token} generated a score for each token with a lightweight module. Random noise was added to the scores to prevent the same tokens from being chosen repeatedly. The top K score tokens were subsequently selected in the forward pass. The algorithm next applied a differentiable function to generate probabilities of selecting a token, which were then used to update the score prediction network through backpropagation.

Early exit strategies \cite{tang2023dynamic,tian2025beyond} save computation by bypassing the processing of specific layers for a subset of tokens. However, they might introduce representation inconsistencies within the network. The expansion of the tokens \cite{huang2024general} improved the training time of ViT by gradually growing the total number of tokens from a small set of initial seed tokens. Each stage introduced new tokens different from the existing ones. The residual tokens were aggregated and processed along with the selected set.

 Although existing pruning techniques have significantly increased the efficiency of ViTs, there exist several drawbacks. Methods performing unstructured pruning hamper the grid-structure of the tokens; thereby, making them incompatible for hierarchical ViT models like Swin UNETR and UNETR. Hard token pruning strategies pose difficulty due to the loss of essential fine-grained information required to reconstruct the segmentation output. Incorporating auxiliary modules to calculate the token relevance scores increases the complexity and parameter count of the model. The importance scores for weighting tokens in dynamic token pruning approaches \cite{rao2021dynamicvit,lin2023the,salam2025skin} are often learned without explicit consideration of prior spatial knowledge about the varying target anatomical structures. This might lead to suboptimal segmentation results.

To address these limitations, we introduce a {\it Prompt-driven Adaptive Token ({\it PrATo})} pruning approach that explicitly incorporates structured spatial priors into the pruning process. In contrast to techniques that depend exclusively on auxiliary modules or heuristics for selecting tokens, our method utilizes the spatial information embedded in the box prompts to direct token retention. It facilitates input-specific adaptation of the network computation. This guarantees the preservation of tokens essential for depicting the structure of the target object, while redundant or less informative tokens are removed. By directly incorporating this spatial constraint, the proposed method improves the precision of the segmentation and optimizes computational efficiency. The research contributions are summarized as follows.
\begin{itemize}
    \item A novel prompt-driven dynamic token pruning framework to increase the efficiency of ViTs by integrating auxiliary spatial prompts. This helps to localize relevant regions related to the target structure.
    \item An efficient entropy-based scoring mechanism to quantify the relevance of different image tokens. This parameter-free approach prevents additional computational overhead during training and inference.
    \item The generalizability of the method is demonstrated across different state-of-the-art ViT-based medical image segmentation models on different publicly available datasets.
\end{itemize}

Section \ref{meth} provides a comprehensive overview of the steps involved in the {\it PrATo} token pruning framework.  The experimental results and details of implementation are presented in Section \ref{setup}, in the publicly available datasets ACDC \cite{bernard2018deep} and ISIC \cite{codella2018skin}. Qualitative and quantitative results derived from the application of our framework, as embedded in state-of-the-art ViT-based segmentation models, are described. Section \ref{concl} provides a summary of the concluding remarks.

\section{Methodology} \label{meth}

This section details our proposed {\it PrATo} framework for integration into ViT blocks of the prevalent ViT-based medical image segmentation models. Recent high-performance medical image segmentation models employ an {\it U-}shaped encoder-decoder architecture \cite{ronneberger2015u}. {\it PrATo} can be seamlessly integrated at different stages of segmentation models to adaptively remove irrelevant tokens from being processed in subsequent steps of the network. 

\subsection{Token generation in ViT}

ViTs transform an input image (or feature map volume) $I \in \mathbb{R}^{C \times H \times W}$ into a sequence of tokens \cite{dosovitskiy2020image}. Here, $H$, $W$ and $C$ represent the height, width, and channel dimensions of $I$. The input is partitioned into a set of non-overlapping patches $Z ; |Z| =\frac{HW}{p^2}$, with each patch having dimension $p^2$. They are flattened and linearly projected in $C'$-dimensional embedding space, resulting in the sequence $P' \in \mathbb{R}^{Z \times C'}$. These embedded vectors are called tokens, with $Z$ denoting the total number of tokens generated. Positional embedding is added to the tokens to retain their spatial locations. $P_0 \in \mathbb{R}^{Z \times C'}$ is the sequence of position-sensitive tokens that serve as input to a transformer encoder.

This encoder block consists of alternating layers of multihead self-attention units ($\mu$) and feed-forward network ($\Phi$) units. The $\mu$ captures global contextual relationships between different tokens, performing self-attention parallelly across multiple heads, to operate in various representational subspaces. $\Phi$ is independently applied to the output of each head, to enhance their representational capacity by including an additional non-linear transformation. The operations within a transformer block are expressed as:
\begin{equation}
    Y_{out} = \Phi[LN\{\mu(LN(P_0))\}] + P_0
\end{equation}

Here, $LN$ \cite{lei2016layer} is the Layer Normalization and $Y_{out} \in \mathbb{R}^{Z \times C'}$ is the final output of the transformer block.

\subsection{Prompt-driven adaptive pruning}

Prompts serve as a prior by offering high-level, task-specific information \cite{jia2022visual}. This directs the model to focus on segmenting specific structures within the input image, resulting in effective segmentation. Consequently, it allows token selection based on contextual information. This framework is referred to as {\it prompt-driven adaptive pruning}, where essential tokens are preserved, as shown in Fig. \ref{fig:prato}.

\begin{figure*}[t]
    \centering
    \includegraphics[width=\linewidth]{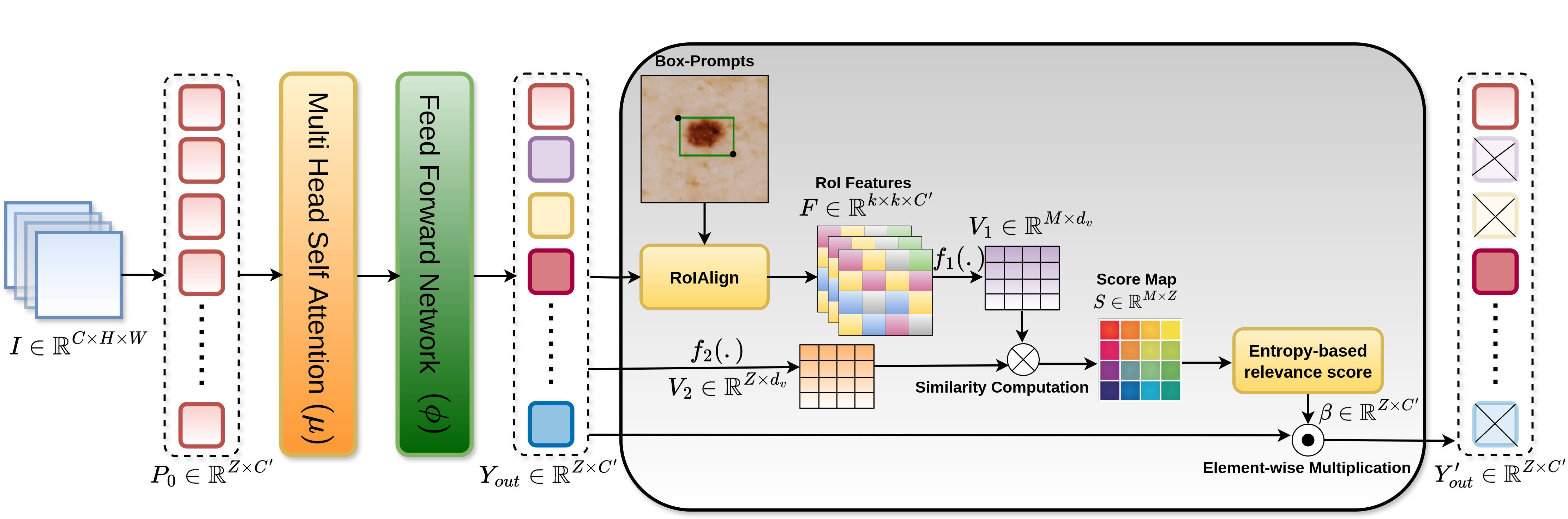}
    \caption{Illustration of the proposed {\it PrATo} framework for pruning ViT tokens.}
    \label{fig:prato}
\end{figure*}
\begin{figure*}[t]
    \centering
    \includegraphics[width=\linewidth]{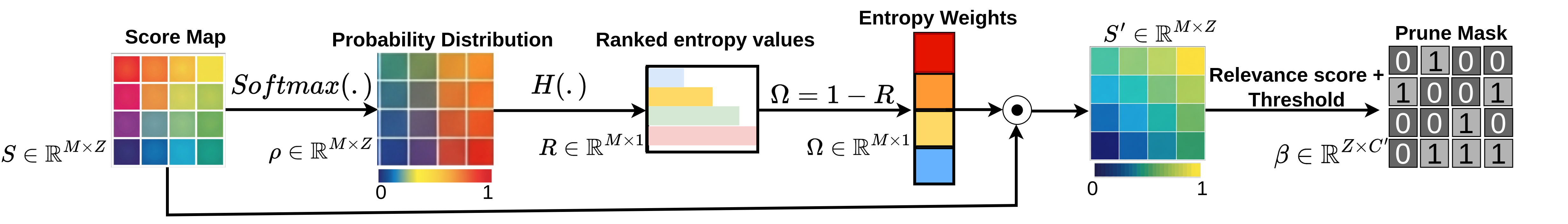}
    \caption{Illustration of the computation of pruning mask $\beta$ from the similarity score map $S$.}
    \label{fig:entropy}
\end{figure*}

 $Y_{out}$ is spatially rearranged to 2D feature maps of dimension $\mathbb{R}^{H'\times W'\times C'}$, where $H'\times W'=Z$. This reshaping maintains spatial coherence as the token order in $Y_{out}$ corresponds to the original spatial locations of the corresponding image patches. Subsequently, it is fed to RoIAlign \cite{he2017mask} along with box-prompts to extract region-specific features $F\in \mathbb{R}^{k \times k \times C'}$, as shown in Fig. \ref{fig:prato}. The box prompts serve as spatial priors, highlighting the most informative regions within the input. RoIAlign performs a soft crop and extracts the region-specific feature embeddings corresponding to the Region-of-Interest (RoI) defined in the prompt from $Y_{out}$. Each token in $Y_{out}$ is a contextual embedding of an image patch. Consequently, the ViT token grid will misalign with the box-prompt coordinates, making naive selection of tokens by hard-cropping infeasible. RoIAlign ensures a spatially coherent feature representation, unlike naive selection of tokens to preserve spatial consistency. 

$F$ is a summarization of the prompted region generated by RoIAlign. However, it is unable to identify globally relevant ViT tokens alone to segment the target structures. Therefore, a similarity score map $S \in \mathbb{R}^{M \times Z}$ is calculated between the region-specific feature, encoded by $F$ and $Y_{out}$, to measure the relevance of each token. The $F$ and $Y_{out}$ are first transformed into lower-dimensional embeddings $V_1 \in \mathbb{R}^{M \times d_V}$ and $V_2 \in \mathbb{R}^{Z \times d_V}$, respectively, with projection matrices $f_1 \in \mathbb{R}^{C' \times d_V}$ and $f_2 \in \mathbb{R}^{C' \times d_V}$. This standardizes the feature representations by mapping them from a specific architectural dimension $C'$ to a consistent low-dimensional embedding space $d_V$. $M$ denotes the number of feature vectors in $V_1$. The similarity score is computed as
\begin{equation}
    S(V_1, V_2) = V_1V_2^{T}/\sqrt{d_{V}}. 
\end{equation}

The next step involves computing the prune mask $\beta \in \mathbb{R}^{Z \times C'}$ from the similarity score mask $S$ to filter out irrelevant tokens, as shown in Fig. \ref{fig:entropy}. $S$ is normalized along the token dimension to generate a probability distribution $\rho \in \mathbb{R}^{M \times Z}$ by softmax operation. The probability $\rho_{ij}$ of the $i$ th feature vector of $V_1$ ($V_{1i}$) that it is relevant to the $j$th feature vector of $V_2$ ($V_{2j}$) is given as $\rho_{ij} =  \frac{e^{(S_{ij})}}{\sum_{k=1}^{Z} e^{S_{ik}}}$. $S_{ij}$ denotes the similarity score between $V_{1i}$ and $V_{2j}$. The softmax amplifies high similarities while suppressing lower ones, to distinctly recognize confident matches. Next, the Shannon entropy \cite{shannon1948mathematical} $H(.)$ is calculated for the probability distribution of each $V_{1i}$, over all ViT tokens.  This yields
\begin{equation}
    H(V_1{_i}) = -\sum_{j=1}^Z\rho_{ij}\log(\rho_{ij}),
\end{equation}
with $H(V_1{_i})$ quantifying the uncertainty of association between $V_{1i}$ and the set of all ViT tokens. A lower value of $H(V_1{_i})$ implies a strong association with a subset of tokens. In contrast, a higher value indicates that $V_{1i}$ is uncertain about its association with the tokens. In lieu of adopting thresholding to filter tokens with low similarity, entropy assesses token relevance in a statistically significant manner, facilitating a confidence-aware selection process. This makes the framework interpretable and applicable across various images. Tokens exhibiting high confidence (low entropy) are preserved, while those that are ambiguous (high entropy) are downweighted.

An inverse entropy weighting scheme is applied to prioritize tokens having lower uncertainty values. The entropy values are first ranked in ascending order. The ranks $R_i|i\in[1,M]$ are normalized to ensure their uniform spacing between 0 and 1. Inverse entropy weights $ \Omega_i = 1 - R_i$, with $\Omega = \{\Omega_i|i\in[1,M]\}$, ensure that higher entropy values (more uncertainty) are assigned to lower weights and vice versa.

\begin{lemma}[Uniformity of Weights]
If the normalized ranks $R_i$ are uniformly distributed on $[0,1]$, then the transformed weights $\Omega$
are also uniformly distributed over $[0,1]$. In other words, applying the transformation $\Omega_i = 1 - R_i$ preserves the uniformity of the distribution, merely reversing the order of values.  
\end{lemma}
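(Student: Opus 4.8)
The plan is to treat the claim as a statement about the affine reflection $g(r) = 1 - r$, and to show that this map carries the uniform law on $[0,1]$ to itself. I would first record the elementary structural facts: $g$ is a bijection of $[0,1]$ onto itself, it is its own inverse, and it fixes the midpoint while swapping the endpoints ($g(0) = 1$, $g(1) = 0$). These immediately give the ``order reversal'' half of the statement, since $R_i \mapsto 1 - R_i$ is strictly decreasing.

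Because the ranks in the construction are evenly spaced, I would first treat the discrete (empirical) reading, which is the one the phrase ``merely reversing the order'' targets directly. Writing $R_i = (i-1)/(M-1)$ for $i \in [1,M]$ (the argument is insensitive to the precise normalization convention), I would compute $\Omega_i = 1 - R_i = (M-i)/(M-1)$ and verify that the set $\{\Omega_i : i \in [1,M]\}$ coincides with the original grid $\{R_i : i \in [1,M]\}$, with the indices $i=1$ and $i=M$ exchanging their positions at the endpoints. Hence the weights populate exactly the same uniformly spaced points as the ranks, so their empirical distribution is unchanged.

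For completeness I would also supply the continuous version through the cumulative distribution function. Assuming $R \sim \mathrm{Uniform}[0,1]$ so that $F_R(r) = r$ on $[0,1]$, I would compute $F_\Omega(\omega) = \Pr(1 - R \le \omega) = \Pr(R \ge 1 - \omega) = 1 - F_R(1 - \omega) = \omega$ for $\omega \in [0,1]$, and identify this as the CDF of $\mathrm{Uniform}[0,1]$; equivalently, the density-transformation route yields a constant density times unit Jacobian $|dr/d\omega| = 1$.

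The main obstacle here is not analytical --- every computation is elementary --- but rather conceptual: fixing the intended meaning of ``uniformly distributed'' in this finite-sample setting, and checking that the endpoint behavior is handled so the reflected grid lands exactly back on the original one. Once the interpretation is pinned down, the conclusion follows immediately from the fact that $g(r) = 1 - r$ is an involutive isometry of the unit interval.
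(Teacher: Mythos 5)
Your continuous-version argument via the cumulative distribution function --- computing $F_\Omega(\omega) = \Pr(1 - R \le \omega) = \Pr(R \ge 1 - \omega) = 1 - (1 - \omega) = \omega$ and identifying this as the uniform CDF --- is exactly the proof given in the paper, so the proposal is correct and takes essentially the same approach. Your additional discrete observation that the reflected grid $(M-i)/(M-1)$ coincides with the original evenly spaced ranks is a reasonable supplement (and arguably closer to what the deterministic rank construction actually warrants), but it goes beyond what the paper's proof contains rather than replacing it.
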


\begin{proof}
Since $R_i \sim \text{Uniform}(0,1)$, its cumulative distribution function (CDF) is  
\[
F_{{R}}(x) = P(R_i \leq x) = x, \quad x \in [0,1].
\]  
For the transformed weights $\Omega_i = 1 - R_i$, the CDF becomes  
\[
F_{\Omega}(x) = P(\Omega_i \leq x) = P(1 - R_i \leq x) = P(R_i \geq 1 - x).
\]  
Since $R_i \sim \text{Uniform}(0,1)$, we substitute its CDF as  
\[
P(R_i \geq 1 - x) = 1 - P(R_i \leq 1 - x) = 1 - (1 - x) = x.
\]  
Thus,  
\[
F_{\Omega}(x) = x, \quad x \in [0,1].
\]  
Since this matches the CDF of a uniform distribution over $[0,1]$, we conclude that $\Omega_i \sim \text{Uniform}(0,1)$.  
\end{proof}

The weighted similarity score map becomes $\tilde{S} = S.\Omega$, the overall relevance $r_i$ of the $i$ th symbol being the mean of the weighted similarity scores. 
\begin{equation}
r_i = \frac{1}{C'}\sum_{j=1}^{C'} \tilde{S}_{ij}.
\end{equation}
The final step modifies the original tokens based on their relevance scores. A binary mask $\beta \in \{0,1\}^Z$ is generated by applying a threshold $\tau$ on the relevance scores as 
\begin{equation}
\beta_i =
\begin{cases} 
1, & \text{if } \sigma(r_i) > \tau, \\
0, & \quad \text{otherwise}.
\end{cases}
\end{equation}
Here, $\sigma$ is the sigmoid function to squash relevance scores in the range $[0,1]$, and $\beta$ is applied to ViT tokens $Y_{out}$ to effectively remove tokens that are deemed irrelevant based on our proposed prompt-guided weighting and thresholding. The masked token $Y_{out}'$ is generated by element-wise multiplication ($\odot$) of $\beta$ with $Y_{out}$, {\it i.e.} $Y_{out}' = \beta \odot Y_{out}.$ 

\begin{lemma}[Retaining high-relevance token set]
If $H(V_{1i})$ is low, there exists a typical set of high-relevance tokens $S_T$ which concentrates the effective information capacity. For the threshold value $\tau$, $S_T \subseteq S_P$, where $S_P$ is the final set of retained tokens.
\end{lemma}

\begin{proof}
    Since $H(V_{1i})$ is low, it satisfies $H(V_{1i})\leq \log_2(Z)-\Delta$, for some token reduction factor $\Delta>0$.\\
    By fundamental property of typical sets, \\
    $\exists$ $S_T\subset Z$ such that: $S_T\leq Z.2^{-\Delta}$ and $S_T$ concentrates the effective information capacity as follows:\\
    \begin{equation*}
        \sum_{k\in S_T}\rho_{ik}\geq 1-\epsilon; for \: \epsilon>0
    \end{equation*}
    $\therefore$ for any token $k\in S_T$ and $l\notin S_T$;
    \begin{equation} \label{eq:1}
        \rho_{ik}>>\rho_{il}
    \end{equation}
    The weights $\Omega$ is a monotonically decreasing function of $H(.)$ as for low $H(.)$ the corresponding $\Omega$ is high. Additionally, $\rho_{ik}$ is a monotonically increasing function of the similarity score $S_{ik}$. $\therefore$ from Eq. (\ref{eq:1}) $\implies S_{ik}>>S_{il}$.\\
    $\therefore$ for important tokens $i\in S_T$ and unimportant tokens $j\notin S_T$, we get:\\
    \begin{equation} \label{eq:2}
        r_i>>r_j
    \end{equation}
    Thus, the algorithm assigns higher scores to important tokens and lower scores to unimportant ones. Since, there exists a significant gap between the relevance scores, $\exists\:r_{min}=min_{i\in S_T}\{r_i\}$ and $r_{max}=max_{j\notin S_T}\{r_j\}$\\
    By Eq. (\ref{eq:2}),
    \begin{equation*}
        r_{min}>r_{max}
    \end{equation*}
    A token $k$ is retained if $\sigma(r_k)>\tau$. $\therefore$ $r_k>\sigma^{-1}(\tau)$. \\
    $\implies r_{min}>\sigma^{-1}(\tau)>r_{max}$\\
    $\forall i\in S_T$, the relevance score $r_i>\sigma^{-1}(\tau)$ which ensures every token in $S_T$ is retained.\\
    Thus, $S_T\subseteq S_P$.
    \end{proof}

\begin{corollary}
    Given the token reduction factor $\Delta >0$, then $|S_T|<Z$
\end{corollary}
\begin{proof}
    By the fundamental property of typical sets we know,\\
    \begin{equation} \label{eq:3}
        |S_T|\leq Z.2^{-\Delta}
    \end{equation}
    From the corollary we know,
    \begin{align}
    \Delta>0
        &\implies 2^{\Delta}>2^0 \\
        &\implies 2^{\Delta}>1 \\
        &\implies 2^{-\Delta}<1 \\
        &\implies Z.2^{-\Delta}<Z
    \end{align}
    $\therefore$ from Eq. (\ref{eq:3}) we get,
    \begin{equation}
        |S_T|\leq Z.2^{-\Delta}<Z \implies|S_T|<Z
    \end{equation}
    
\end{proof}

\section{Implementation details} \label{setup}

This section summarizes the public datasets used in this study, the implementation details and a qualitative and quantitative analysis of experimental results in terms of performance metrics. Ablations and comparative study (with related state-of-the-art literature), demonstrated the efficacy of our {\it PrATo} framework.

\subsection{Datasets}

The data set {\it Automated Cardiac Diagnosis Challenge (ACDC)} \cite{bernard2018deep}, acquired from the University Hospital of Dijon, France, contains 150 volumes of cardiac magnetic resonance images (MRI). Trained radiologists annotated the volumes for different cardiac components, {\it viz.} Right Ventricle (RV), Left Ventricle (LV), and Myocardium. The voxel intensities were clipped in the range [-170, 250] HU and subsequently normalized to the range of [0,1]. Data augmentation techniques, {\it viz.} random flipping, cropping and rotate were employed to expand the training dataset.

The data set {\it ISIC} \cite{codella2018skin} of the International Skin Imaging Collaboration is a collection of dermoscopic images for skin cancer research. Binary segmentation masks are created by experts where a pixel value of 0 indicates background while a pixel value of 1 denotes the lesion. The images were normalized, followed by the application of data augmentation techniques to enlarge the training data set.

\subsection{Performance Metrics and Loss function}

The {\it PrATo} framework is evaluated on state-of-the-art ViT-based segmentation models, {\it viz.} UNETR, TransUNET, SegFormer and Swin UNETR. The models were trained in Python 3.9, using PyTorch and MONAI libraries, on a NVIDIA RTX A5500 GPU with 24GB of memory. The adopted train-test-validation split is 80:10:10. The framework uses the tightest possible bounding box to encompass the entire target region.

A combined loss function, incorporating Dice loss ($\mathbf{\mathcal{L}_{d}}$) with Categorical Cross-Entropy loss ($\mathbf{\mathcal{L}_{c}}$) \cite{taghanaki2019combo} was used to train the models. The Dice loss function is commonly applied in image segmentation tasks that deal with class imbalance. Mathematically $\mathbf{\mathcal{L}_{d}}$ is expressed as
\begin{equation}
        \mathbf{\mathcal{L}_{d}} = \alpha - \sum_{\alpha=1}^N\left(\frac{2 \sum_{i=1}^V \hat{y}_{\alpha,i}y_{\alpha,i} + \epsilon}{\sum_{i=1}^V \hat{y}_{\alpha,i}+y_{\alpha,i} + \epsilon}\right),
\end{equation}
where $N$ represents the total number of classes, with $\hat{y}_{\alpha,i}$ and $y_{\alpha,i}$ denoting the predicted and true values, respectively, for the $i$th voxel concerning class $\alpha$. $V$ signifies the total number of voxels in the input and $\epsilon$ is the additive smoothing parameter to prevent division errors by zero.

The categorical cross-entropy loss $\mathbf{\mathcal{L}_{c}}$ measures the difference between the probability distributions of the predicted and the ground truth map. It is expressed as
\begin{equation}
    \mathcal{L}_{c}=-\frac{1}{V}\sum_{i=1}^V\sum_{\alpha=1}^C y_{\alpha,i}\log{(\hat{y}_{\alpha,i})}.
\end{equation}

The composite loss function, using both Dice and categorical cross-entropy loss, leverages the respective benefits of each component. While $\mathbf{\mathcal{L}_{d}}$ mitigates the issue of class imbalance between foreground and background pixels, the $\mathcal{L}_{c}$ component adjusts the trade-off between False Positives ($FP$) and False Negatives ($FN$) on the predicted output map \cite{taghanaki2019combo}. The composite loss function is 
\begin{equation}
    \mathbf{\mathcal{L}(\{\hat{\gamma},\gamma\};\Phi})=\mathbf{\mathcal{L}_{d}(\{\hat{\gamma},\gamma\},\Phi})+\mathbf{\mathcal{L}_{c}(\{\hat{\gamma},\gamma\},\Phi}),
\end{equation}
where $\hat{\gamma}$ and $\gamma$ denote the predicted and the ground truth map, respectively. $\Phi$ represents the model parameters.

The Dice Score Coefficient ($DSC$), Intersection-Over-Union ($IoU$) and 95\% Haursdorff distance ($HD95$) \cite{nguyen2023manet} were used to compare the segmentation performance
between the original baselines and their adaptations after incorporating {\it PrATo} framework. The metrics are mathematically represented as
\begin{equation}
    DSC=\frac{2TP}{2TP+FP+FN},
    \label{dsc}
\end{equation}
\begin{equation}
    IoU=\frac{TP}{TP+FP+FN},
    \label{iou}
\end{equation}
\begin{multline}
    HD95_{\alpha}(\hat{\gamma}, \gamma) = \max_{\alpha \in C}\{\max_{\hat{y}_{\alpha,i \in \hat{\gamma}}}\min_{y_{\alpha,j\in \gamma}}\delta(\hat{y}_{\alpha,i \in \hat{\gamma}},y_{\alpha,j\in \gamma}), \\
    \max_{y_{\alpha,i \in \gamma}}\min_{\hat{y}_{\alpha,j\in \hat{\gamma}}}\delta (\hat{y}_{\alpha,i \in \hat{\gamma}},y_{\alpha,j\in \gamma}) \},
\end{multline}
where $TP$ represents True Positives and $\delta(.)$ corresponds to the Euclidean distance.

\subsection{Ablation Study}

\begin{table}[t]
\caption{Component ablation study of {\it PrATo} on {\it ACDC} dataset. The best results are marked in bold.}
\label{table:abl1}
\centering
\resizebox{\linewidth}{!}{
\begin{tabular}{|c|ccc|c|c|}
\hline
\multirow{3}{*}{\textbf{PrATo Variants}} & \multicolumn{3}{c|}{\textbf{\textit{DSC}}}                                                                                  & \multirow{3}{*}{\textbf{\textit{mIoU}}} & \multirow{3}{*}{\textbf{\textit{mHD95}}} \\ \cline{2-4}
                                & \multicolumn{2}{c|}{\textbf{Ventricle}}                                              & \multirow{2}{*}{\textbf{Myocardium}} &                       &                        \\ \cline{2-3}
                                & \multicolumn{1}{c|}{\textbf{Left}}            & \multicolumn{1}{c|}{\textbf{Right}}           &                             &                       &                        \\ \hline
w/o Entropy                     & \multicolumn{1}{c|}{0.4923}          & \multicolumn{1}{c|}{0.6819}          & 0.7586                      & 0.5393                & 18.48                  \\ \hline
Naive Cropping                  & \multicolumn{1}{c|}{0.5272}          & \multicolumn{1}{c|}{0.7068}          & 0.7739                      & 0.5692                & \textbf{15.89}         \\ \hline
Proposed                        & \multicolumn{1}{c|}{\textbf{0.6146}} & \multicolumn{1}{c|}{\textbf{0.7171}} & \textbf{0.819}              & \textbf{0.6129}       & 19.87                  \\ \hline
\end{tabular}}
\end{table}

\begin{table}[t]
\caption{Ablation study of {\it PrATo} on {\it ACDC} dataset, over various threshold $\tau$.}
\label{table:abl2}
\centering
\begin{tabular}{|cc|c|c|l}
\cline{1-4}
\multicolumn{2}{|c|}{$\mathbf{\tau}$}                                & \textbf{GFLOPs} & \textbf{\textit{mDSC}}  &  \\ \cline{1-4}
\multicolumn{1}{|c|}{\multirow{4}{*}{Fixed}}       & 0.1  & 15.25  & 0.6131 &  \\
\multicolumn{1}{|c|}{}                            & 0.2  & 15.78  & 0.6425 &  \\
\multicolumn{1}{|c|}{}                            & 0.3  & 13.32  & 0.6701 &  \\
\multicolumn{1}{|c|}{}                            & 0.5  & 9.07   & 0.6612 &  \\ \cline{1-4}
\multicolumn{1}{|c|}{\multirow{3}{*}{Percentile}} & 25$^{th}$ & 26.78  & 0.6933 &  \\
\multicolumn{1}{|c|}{}                            & 50$^{th}$ & 37.31  & 0.6952 &  \\
\multicolumn{1}{|c|}{}                            & 75$^{th}$ & 18.66  & 0.6575 &  \\ \cline{1-4}
\end{tabular}
\end{table}

\begin{table}[t]
\caption{Ablation study for spatial dimension of the RoI Align ($k$) on {\it ACDC} dataset.}
\label{table:abl3}
\centering
\resizebox{0.8\linewidth}{!}{
\begin{tabular}{|c|c|c|c|c|}
\hline
$\mathbf{k }=$   & 3              & 5               & 7      & 9      \\ \hline
\textbf{\textit{mDSC}}  & 0.6761         & \textbf{0.7169} & 0.6405 & 0.6627 \\ \hline
\end{tabular}}
\end{table}

The contribution of the core components of {\it PrATo} is presented in Table \ref{table:abl1}. Two variants of {\it PrATo} are compared with the proposed framework, specifically, one without entropy-based weighting and the other replacing RoIAlign with Naive cropping. The second variant directly selects the tokens with center coordinates located within the box prompt. The proposed {\it PrATo} framework attains the highest $DSC$ value among all cardiac organs and demonstrates superior average $IoU$ relative to other variants. This indicates that the integration of RoIAlign for feature extraction and entropy-based token scoring effectively identifies relevant tokens. The variant lacking the entropy-weighting scheme demonstrates a notable decrease in $DSC$ and $mIoU$ values. This variant exclusively employs the unrefined features of RoIAlign to identify important tokens. Therefore, the entropy-weighting scheme is necessary to filter ambiguous features, leading to precise token selection. Naive cropping selects all tokens within the box indiscriminately, as it does not possess the advanced feature extraction capabilities of RoIAlign. This can be evidenced by the significant drop in $DSC$ values of the left ventricle and myocardium. Therefore, RoIAlign is the superior feature extraction approach for this task.

However, the average HD95 value ($mHD95$) is lower with the naive cropping approach than with the proposed framework. The fine-grained details especially around the edges are smoothed out by directly selecting all the underlying tokens within the box-prompt. This spatially averaged query directs the similarity mechanism to select contiguous groups of tokens, leading to predictions characterized by simpler boundaries. Despite the reduced penalties for smoother boundaries as indicated by the $HD95$ metric, the overall shape of the target structure is not accurately predicted, as demonstrated by the lower values of $DSC$ and $IoU$.

Table \ref{table:abl2} quantifies the experimental results of fixed threshold values (0.1, 0.2, 0.3, 0.5) and adaptive percentile-based threshold values ($25^{th}, 50^{th}, 75^{th}$) for $\tau$. Lower values of $\tau$, {\it viz.} 0.1 and 0.2 exhibits a reduced value of mean {\it DSC} over the different classes in {\it ACDC}. This suggests that allowing low-relevance token processing tends to introduce noise, which impacts the final output quality. Although increasing the threshold value to 0.5 leads to a significant decrease in GFLOPs, it also results in a slight drop in segmentation accuracy, indicating over-pruning.

In contrast, percentile-based thresholding has higher segmentation accuracy, as evidenced by higher {\it mDSC} values. Lower percentile values ($25^{th}$ and $50^{th}$) retain comparatively more tokens than higher percentiles ($75^{th}$), as indicated by the increase in Giga Floating Point Operations (GFLOPs). $\tau$ at the $25^{th}$ percentile is found to have the optimal balance between accuracy and computational efficiency. Discarding 75\% of the total tokens results in suboptimal segmentation performance due to information loss. Adaptive percentile-based thresholding provides greater flexibility to the token pruning mechanism than fixed threshold value and achieves a superior balance between segmentation performance and computational costs. The fixed threshold values remain constant for all inputs, making them inflexible to the distinct score distribution of each input. In contrast, percentile-based thresholding always preserves a constant proportion of relevant tokens in relation to the specific input. The risk of discarding essential tokens is minimal, thus maintaining the segmentation performance.

Table \ref{table:abl3} presents the results of the ablation study performed to analyze the impact of different spatial dimensions of the region-specific characteristics derived from RoIAlign. Segmentation performance improves significantly from $k=3$ to 5. The $5\times 5$ feature map obtained by setting $k=5$ results in a high-dimensional query that allows the model to learn discriminative and fine-grained details from the bounded region of the box prompt. However, increasing the value of $k$ might include specific textures or noise unique to the small target region. Therefore, while finding its association with the rest of the tokens, the model fails to generalize and overfits to the specific input prompt. Consequently, the segmentation performance drops to higher values of $k$. Therefore, $k=5$ was chosen as the final value for the proposed framework.

\section{Results and Discussion}

\begin{table*}[t]
\caption{Comparative analysis of {\it PrATo} with other pruning frameworks on {\it ACDC} and {\it ISIC} datasets. The best results are marked in \textbf{bold}.}
\label{table:quan_f}
\centering
\resizebox{\linewidth}{!}{
\begin{tabular}{|c|ccccc|ccc|c|}
\hline
                                       & \multicolumn{5}{c|}{\textit{\textbf{ACDC}}}                                                                                                                                                                                                          & \multicolumn{3}{c|}{}                                                                                                                                                  &                                                \\ \cline{2-6}
                                       & \multicolumn{3}{c|}{\textit{\textbf{DSC}}}                                                                                               & \multicolumn{1}{c|}{}                                         &                                           & \multicolumn{3}{c|}{\multirow{-2}{*}{\textit{\textbf{ISIC}}}}                                                                                                          &                                                \\ \cline{2-4} \cline{7-9}
                                       & \multicolumn{2}{c|}{\textbf{Ventricle}}                                     & \multicolumn{1}{c|}{}                                      & \multicolumn{1}{c|}{}                                         &                                           & \multicolumn{1}{c|}{}                                        & \multicolumn{1}{c|}{}                                        &                                          &                                                \\ \cline{2-3}
\multirow{-4}{*}{\textbf{Framework}}   & \multicolumn{1}{c|}{\textbf{Left}}   & \multicolumn{1}{c|}{\textbf{Right}}  & \multicolumn{1}{c|}{\multirow{-2}{*}{\textbf{Myocardium}}} & \multicolumn{1}{c|}{\multirow{-3}{*}{\textit{\textbf{mIoU}}}} & \multirow{-3}{*}{\textit{\textbf{mHD95}}} & \multicolumn{1}{c|}{\multirow{-2}{*}{\textit{\textbf{DSC}}}} & \multicolumn{1}{c|}{\multirow{-2}{*}{\textit{\textbf{IoU}}}} & \multirow{-2}{*}{\textit{\textbf{HD95}}} & \multirow{-4}{*}{\textbf{Inference time (ms)}} \\ \hline
DynamicViT     & \multicolumn{1}{c|}{0.5115}          & \multicolumn{1}{c|}{0.5898}          & \multicolumn{1}{c|}{0.7302}                                & \multicolumn{1}{c|}{0.4879}                                   & 24.03                                     & \multicolumn{1}{c|}{0.8493}                                  & \multicolumn{1}{c|}{0.7539}                                  & 21.59                                    & 10.11                                          \\ \hline
EvoViT         & \multicolumn{1}{c|}{0.5418}          & \multicolumn{1}{c|}{0.681}           & \multicolumn{1}{c|}{0.8085}                                & \multicolumn{1}{c|}{0.5561}                                   & 24.5                                      & \multicolumn{1}{c|}{0.8541}                                  & \multicolumn{1}{c|}{0.7566}                                  & \textbf{8.30}                            & 59.15                                          \\ \hline
DToP                                   & \multicolumn{1}{c|}{0.4463}          & \multicolumn{1}{c|}{0.5598}          & \multicolumn{1}{c|}{0.6845}                                & \multicolumn{1}{c|}{0.4447}                                   & 21.7                                      & \multicolumn{1}{c|}{0.8511}                                  & \multicolumn{1}{c|}{0.7449}                                  & 35.41                                    & 15.23                                          \\ \hline
Random Token Masking & \multicolumn{1}{c|}{0.5094}          & \multicolumn{1}{c|}{0.5927}          & \multicolumn{1}{c|}{0.7136}                                & \multicolumn{1}{c|}{0.4905}                                   & \textbf{18.85}                            & \multicolumn{1}{c|}{0.8402}                                  & \multicolumn{1}{c|}{0.7377}                                  & 13.83                                    & \textbf{2.66}                                  \\ \hline
STP            & \multicolumn{1}{c|}{0.3958}          & \multicolumn{1}{c|}{0.4224}          & \multicolumn{1}{c|}{0.5677}                                & \multicolumn{1}{c|}{0.3388}                                   & 21.72                                     & \multicolumn{1}{c|}{0.8544}                                  & \multicolumn{1}{c|}{0.7483}                                  & 10.91                                    & 13.21                                          \\ \hline
TRAM                                   & \multicolumn{1}{c|}{0.4867}          & \multicolumn{1}{c|}{0.5448}          & \multicolumn{1}{c|}{0.6844}                                & \multicolumn{1}{c|}{0.4562}                                   & 20.08                                     & \multicolumn{1}{c|}{0.6424}                                  & \multicolumn{1}{c|}{0.5514}                                  & 40.02                                    & 3.07                                           \\ \hline
{\it PrATo}                                  & \multicolumn{1}{c|}{\textbf{0.6146}} & \multicolumn{1}{c|}{\textbf{0.7171}} & \multicolumn{1}{c|}{\textbf{0.819}}                        & \multicolumn{1}{c|}{\textbf{0.6129}}                          & 19.87                                     & \multicolumn{1}{c|}{\textbf{0.8634}}                         & \multicolumn{1}{c|}{\textbf{0.7678}}                         & 17.34                                    & 3.55                                           \\ \hline
\end{tabular}}
\end{table*}

\begin{figure*}[t]
    \centering
    \includegraphics[width=0.7\textwidth]{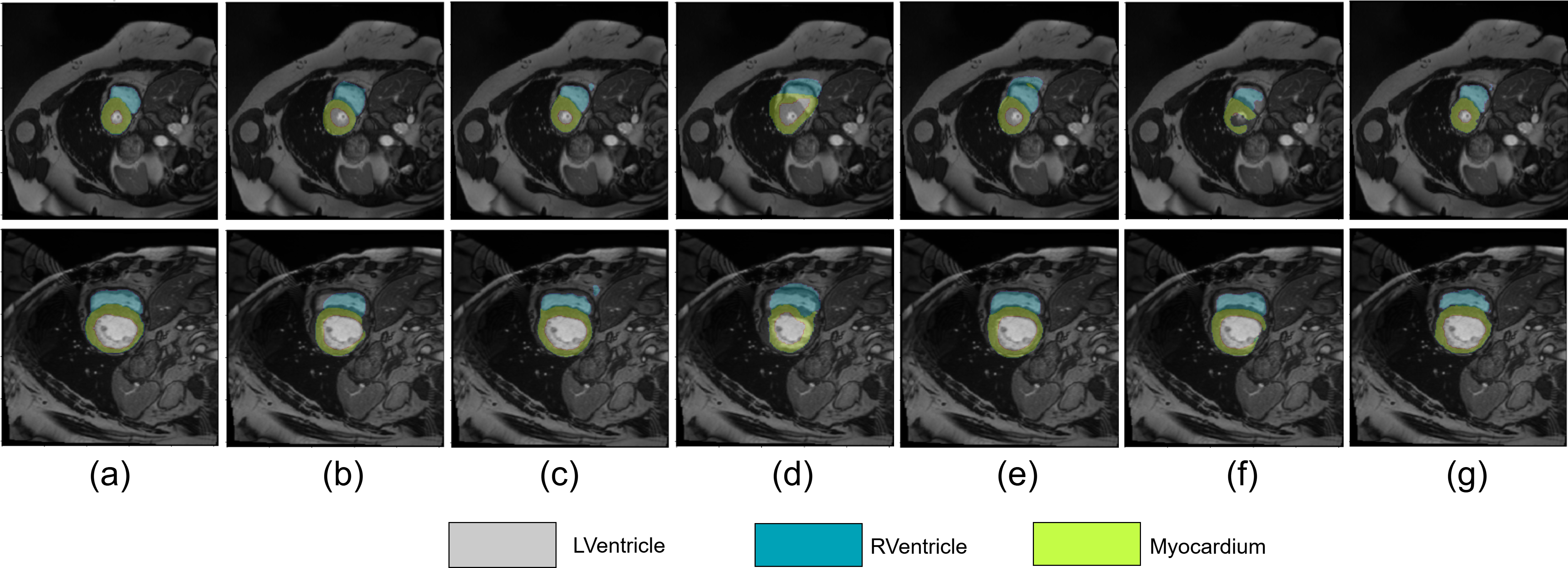}
    \caption{Sample segmentation maps for comparing {\it PrATo} with other pruning frameworks on the {\it ACDC} dataset. (a) Input MRI image with overlay ground truth and sample outputs from (b) Dynamic ViT, (c) EvoViT, (d) STP, (e) DToP, (f) Random Token Masking, and (g) {\it PrATo} frameworks.}
    \label{fig:qual_acdc}
\end{figure*}

\begin{figure*}[t]
    \centering
    \includegraphics[width=0.7\textwidth]{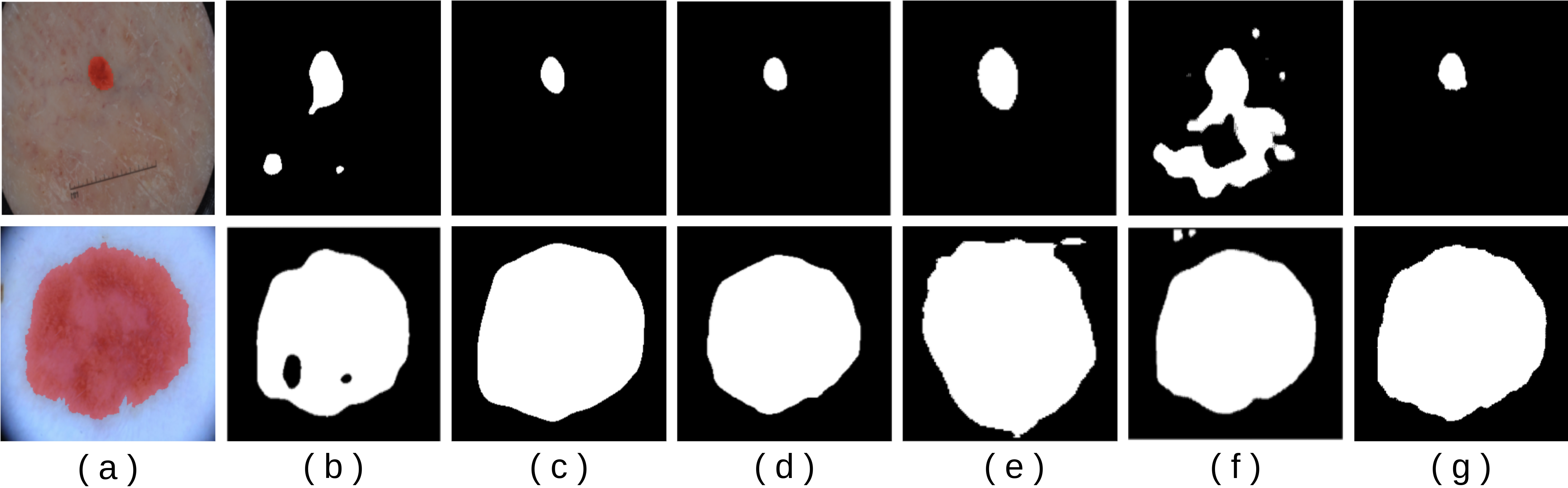}
    \caption{Sample segmentation maps for comparing {\it PrATo} with other pruning frameworks on the {\it ISIC} dataset. (a) Input dermoscopy image with overlay ground truth and sample outputs from (b) Random Token Masking, (c) Dynamic ViT, (d) EvoViT, (e) STP, (f) DToP, and (g) {\it PrATo} frameworks.}
    \label{fig:qual_isic}
\end{figure*}

Table \ref{table:quan_f} presents the experimental findings to understand the potential of the prompt-guided {\it PrATo} compared to other state-of-the-art pruning frameworks. The comparisons also include a crucial baseline, called Random Token Masking, which randomly drops a fixed number of tokens without any guidance from spatial priors like box-prompt. This highlights the impact of pruning without any auxiliary guidance, like spatial priors from box-prompts. Class-wise $DSC$ along with the average values $IoU$ and $HD95$ are reported for the {\it ACDC} dataset. {\it PrATo} shows superior performance in terms of $DSC$, achieving improvements of $7.28\%$, $3.61\%$ and $1.05\%$ for the left ventricle, right ventricle and myocardium, respectively, compared to the next best baseline. The mean $IoU$ exceeds that of the second-best method by nearly $7\%$. Random Token Masking demonstrates a marginal advantage over {\it PrATo} in terms of $mHD95$; however, its low $DSC$ and $mIoU$ values suggest under-segmentation of regions that align closely with the true organ regions, as depicted in Fig. \ref{fig:qual_acdc}(f).

A qualitative comparison is presented in Fig. \ref{fig:qual_acdc} to visually assess segmentation maps from different frameworks. The output of {\it PrATo} is visually the closest to the ground truth compared to other approaches. The overall shapes of the target structures are accurately captured, indicating the importance of spatial priors in retaining relevant tokens. STP produces imprecise output with inaccurate boundaries and under-segmented regions of the left ventricle and myocardium, respectively, as seen in Fig. \ref{fig:qual_acdc} (d). Dynamic ViT, EvoViT and DToP oversegments the left ventricle, as evident from the top row of Fig. \ref{fig:qual_acdc} (b), (c), and (e).

{\it PrATo} achieves the best performance based on $DSC$ and $IoU$ scores on the {\it ISIC} dataset, as shown in Table \ref{table:quan_f}. This indicates that the segmentation output of {\it PrATO} has the best overlap with the ground truth, accurately delineating the lesion region compared to other methods. However, EvoViT achieves the best $HD95$ score. This difference in boundary delineation efficacy between EvoViT and {\it PrATo} originates from their different pruning mechanisms. EvoViT retains irrelevant tokens to maintain the spatial grid structure, thus preserving finer boundary details. Although the box-prompt in {\it PrATo} serves as a robust prior to identifying the overall region, it represents a coarser form of guidance that may result in inaccuracies in boundary delineation.

Fig. \ref{fig:qual_isic} illustrates the sample segmentation maps of the {\it PrATo} and other pruning frameworks in the {\it ISIC} data set. The proposed framework generates visually precise segmentation output for both small and large lesions. The lesion boundaries are smooth and the overall shape aligns closely with the truth of the ground. Random Token Masking produces several under-segmented regions in the output. STP and DToP display numerous oversegmented areas characterized by irregular boundaries. 

{\it PrATo} outperforms several high-accuracy methods in inference speed, such as EvoViT and Dynamic ViT, as detailed in Table \ref{table:quan_f}. It is approximately 3x faster than Dynamic ViT and nearly 15x faster than the high-performing model EvoViT. The ability of {\it PrATo} to exhibit competitive performance with reduced inference times highlights the efficiency of the proposed prompt-guided mechanism. Although Random Token Masking and TRAM are comparatively faster than {\it PrATo}, their performance degradation makes them unsuitable for clinical use. 

\begin{table*}[t]
\caption{Quantitative analysis of the implementation of {\it PrATo} across various medical image segmentation baseline models on the {\it ACDC} and {\it ISIC} datasets. The DSC value is reported for each baseline and their corresponding pruned version.}
\label{table:quan}
\centering
\resizebox{0.7\linewidth}{!}{
\begin{tabular}{|c|cccccc|cc|}
\hline
\multirow{4}{*}{\diagbox{\textbf{Models}}{\textbf{Datasets}}} & \multicolumn{6}{c|}{\textbf{\textit{ACDC}}}                                                                                                                                                      & \multicolumn{2}{c|}{\textbf{\textit{ISIC}}}                                            \\ \cline{2-9} 
                        & \multicolumn{4}{c|}{\textbf{Ventricle}}                                                                                              & \multicolumn{2}{c|}{\multirow{2}{*}{\textbf{Myocardium}}} & \multicolumn{1}{c|}{\multirow{3}{*}{\textbf{Baseline}}} & \multirow{3}{*}{\textbf{Pruned}} \\ \cline{2-5}
                        & \multicolumn{2}{c|}{\
                        \textbf{Left}}                                & \multicolumn{2}{c|}{\textbf{Right}}                           & \multicolumn{2}{c|}{}                            & \multicolumn{1}{c|}{}                      &                         \\ \cline{2-7}
                        & \multicolumn{1}{c|}{\textbf{Baseline}}  & \multicolumn{1}{c|}{\textbf{Pruned}} & \multicolumn{1}{c|}{\textbf{Baseline}} & \multicolumn{1}{c|}{\textbf{Pruned}} & \multicolumn{1}{c|}{\textbf{Baseline}}         & \textbf{Pruned}       & \multicolumn{1}{c|}{}                      &                         \\ \hline
UNETR                   & \multicolumn{1}{c|}{0.5521} & \multicolumn{1}{c|}{0.5943}  & \multicolumn{1}{c|}{0.6952}         & \multicolumn{1}{c|}{0.7201}  & \multicolumn{1}{c|}{0.7773}        & 0.7820         & \multicolumn{1}{c|}{0.8330}                 & 0.8754                   \\ \hline
Swin UNETR              & \multicolumn{1}{c|}{0.6400} & \multicolumn{1}{c|}{0.6621}  & \multicolumn{1}{c|}{0.7940}         & \multicolumn{1}{c|}{0.7854}   & \multicolumn{1}{c|}{0.8462}        & 0.8543      & \multicolumn{1}{c|}{0.8850}                 & 0.8832                   \\ \hline
TransUNET               & \multicolumn{1}{c|}{0.6401} & \multicolumn{1}{c|}{0.7061}  & \multicolumn{1}{c|}{0.7693}         & \multicolumn{1}{c|}{0.8312 }  & \multicolumn{1}{c|}{0.8710}        & 0.8821     & \multicolumn{1}{c|}{0.8242}                 & 0.8253                 \\ \hline
SegFormer         & \multicolumn{1}{c|}{0.6841} & \multicolumn{1}{c|}{0.6791}  & \multicolumn{1}{c|}{0.8221}         & \multicolumn{1}{c|}{0.8163 }  & \multicolumn{1}{c|}{0.8981}        & 0.9024       & \multicolumn{1}{c|}{0.8121}                 & 0.8220                   \\ \hline
\end{tabular}}
\end{table*}

\begin{figure*}[t]
    \centering
    \includegraphics[width=0.9\linewidth]{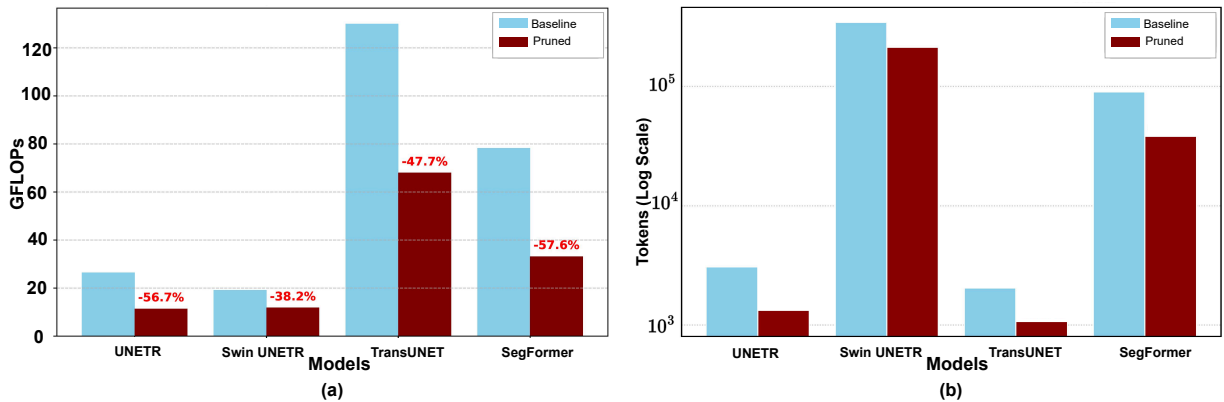}
    \caption{Graphical plot, comparing (a) GFLOPs and (b) Token Sparsity of the segmentation models with respect to their pruned versions.}
    \label{fig:barplot}
\end{figure*}
{\it PrATo} was integrated with various ViT-based state-of-the-art segmentation models to demonstrate its generalizability and model-agnostic nature. Table \ref{table:quan} summarizes the values of {\it DSC} for the segmentation models and their pruned versions using {\it PrATo} in the datasets {\it ACDC} and {\it ISIC}.  The proposed framework is applied to the output of ViT blocks for non-hierarchical models such as UNETR and TransUNET and subsequently to the patch merging stage for hierarchical models like Swin UNETR and SegFormer. This ensures that the architectural operations in the models are not compromised and the reduced set of tokens is propagated to subsequent stages. UNETR and TransUNET exhibit significant performance gains ($1\%-7\%$ approx.) in {\it DSC} values in both datasets. This suggests that the prompt-based spatial prior helps the model focus more on relevant regions. Swin UNETR and SegFormer demonstrate stable performance, even after substantial token removal, as seen in Fig. \ref{fig:barplot}. Therefore, the proposed framework improves the overall segmentation accuracy by retaining relevant tokens for target structures. 

Models with a full self-attention mechanism, such as UNETR and TransUNET, evaluate the association between all possible pairs of tokens. However, this might include processing irrelevant tokens. The proposed pruning framework functions as a regularizer, enabling the model to focus on relevant tokens corresponding to the target structures. This improves the segmentation performance. In contrast, models with local self-attention, like Swin UNETR and SegFormer, are optimized to attend the relevant local features, leading to efficient segmentation performance. Therefore, the primary benefit of {\it PrATo} incorporated variants of such models is the computational savings rather than performance improvement.

Fig. \ref{fig:barplot}(a) illustrates that the proposed framework significantly reduces computational costs across all models. The pruned variants of SegFormer and UNETR show the highest reduction in terms of GFLOPs. Fig. \ref{fig:barplot}(b) depicts a significant decrease in the token density of the pruned variants, compared to their respective baselines. This elimination of irrelevant tokens leads to computationally efficient segmentation models. Fig. \ref{fig:qual} qualitatively analyzes the effectiveness of the proposed framework. The samples highlight cases where the pruned version effectively addresses issues of over- and under-segmentation in prediction by the baselines. The experiments suggest that {\it PrATo} is a versatile framework to improve the efficacy of different segmentation models.

\begin{figure}[t]
    \centering
    \includegraphics[width=\linewidth]{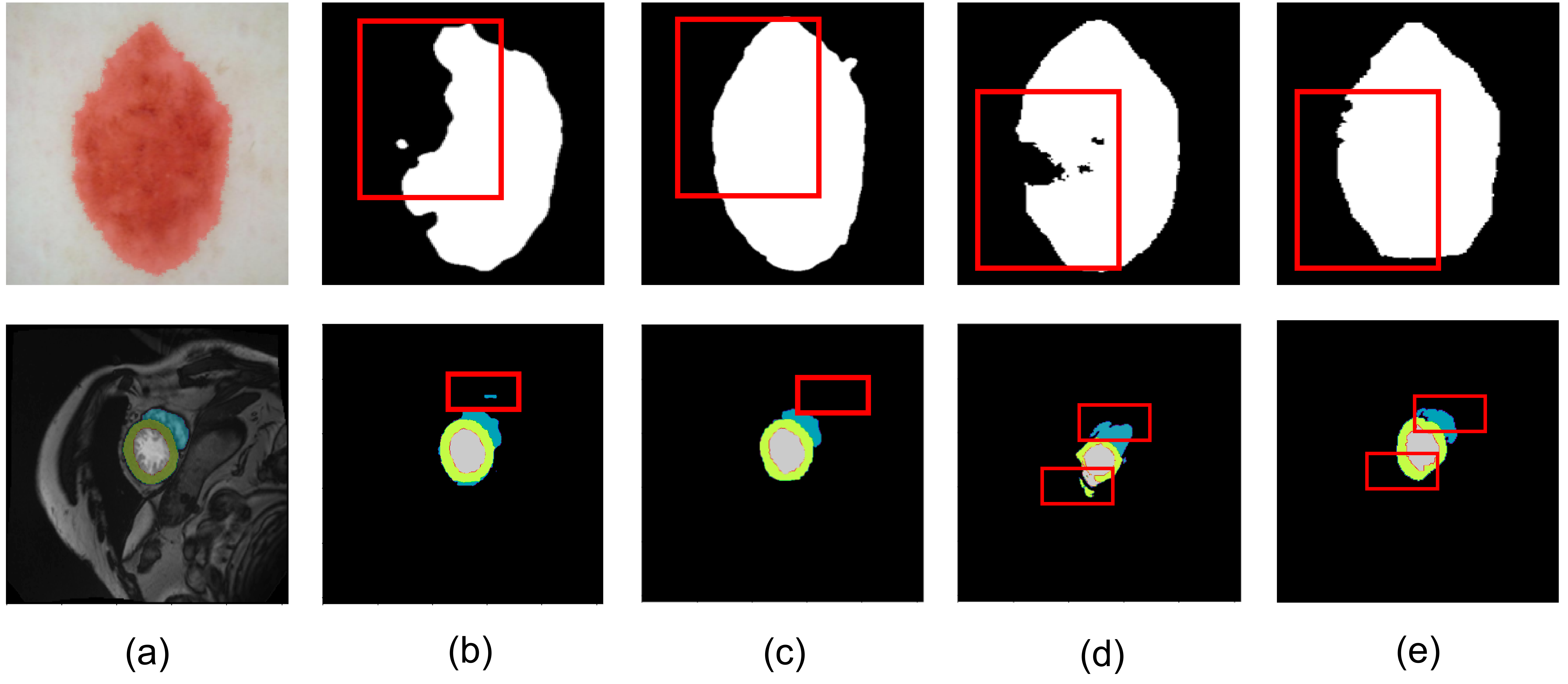}
    \caption{Qualitative results on sample images, illustrating (a) input image, with overlaid ground truth and prediction from (b) SegFormer (baseline), (c) SegFormer (pruned), (d) UNETR (baseline), and (e) UNETR (pruned).\\ Row 1: {\it ISIC}, Row 2: {\it ACDC}, sample images. Red boxes denote the comparison area.}
    \label{fig:qual}
\end{figure}
\begin{figure}[t]
    \centering
    \includegraphics[width=\linewidth]{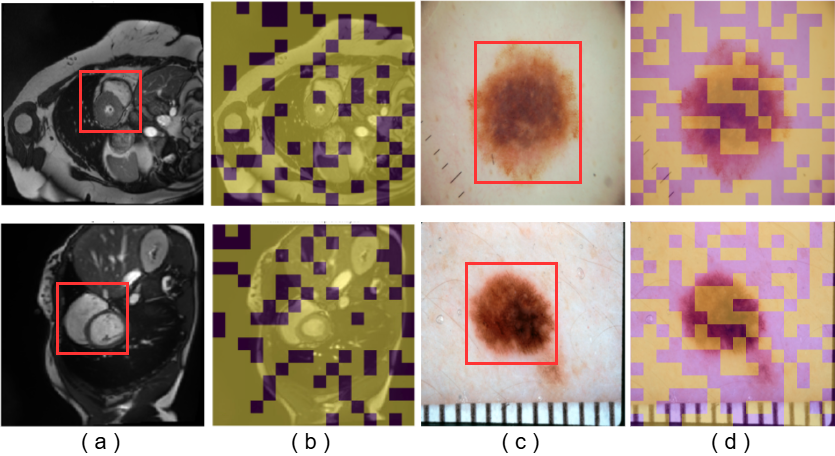}
    \caption{Sample token retention maps of {\it PrATo} from {\it ACDC} (a)-(b) and {\it ISIC} (c)-(d) datasets. The red box highlights the target region.}
    \label{fig:token_ret}
\end{figure}

\begin{figure}[t]
    \centering
    \includegraphics[width=\linewidth]{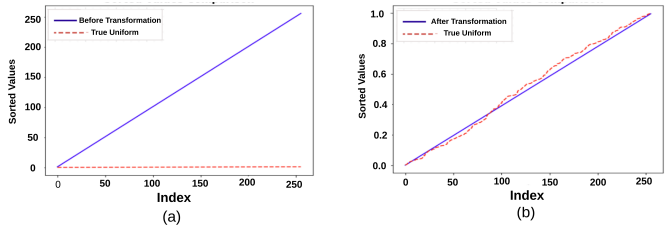}
    \caption{Line plots comparing the (a) non-uniform distribution of token relevance ranking, and (b) uniform distribution of entropy-based weighting, for an image. }
    \label{fig:com_trans}
\end{figure}

Fig. \ref{fig:token_ret} visualizes the sample token retention maps from both {\it ACDC} and {\it ISIC} datasets to gain insight into the effectiveness of the proposed approach. The retained tokens (marked by the yellow patches) correspond to the relevant spatial regions associated with the target structures. {\it PrATo} retains a dense set of tokens for the {\it ACDC} dataset, as shown in Fig. \ref{fig:token_ret}(b), which is highly concentrated within the target region. This preserves the finer details related to the different cardiac organs. In contrast, the framework intelligently adapts to the homogeneous texture of the skin lesions and prunes redundant tokens related to the uniform structure of the region of interest (Fig. \ref{fig:token_ret}(d)). This illustrates that {\it PrATo} seamlessly adapts its pruning mechanism based on the characteristics of the target structures. Fig. \ref{fig:com_trans} visualizes the effectiveness of the scoring and weighting mechanism of {\it PrATo}. Fig. \ref{fig:com_trans}(a) illustrates that the token relevance ranking distribution is non-uniform, with a small subset of highly relevant tokens. The flat `` True Uniform " line marked in red represents a vast majority of tokens have relevance scores close to zero. Therefore, these scores might lead to instability while generating the pruning mask. Fig. \ref{fig:com_trans}(b) demonstrates the weight distribution after the proposed entropy-based weighting scheme. The transformed weights follow the true uniform line, which highlights that the proposed method effectively transforms the skewed weights to a nearly uniform distribution. Therefore, such a stable weighting system indicates a robust pruning process as it prevents a few outlier tokens with extreme scores from skewing the thresholding process. 

\begin{figure}[t]
    \centering
    \includegraphics[width=\linewidth]{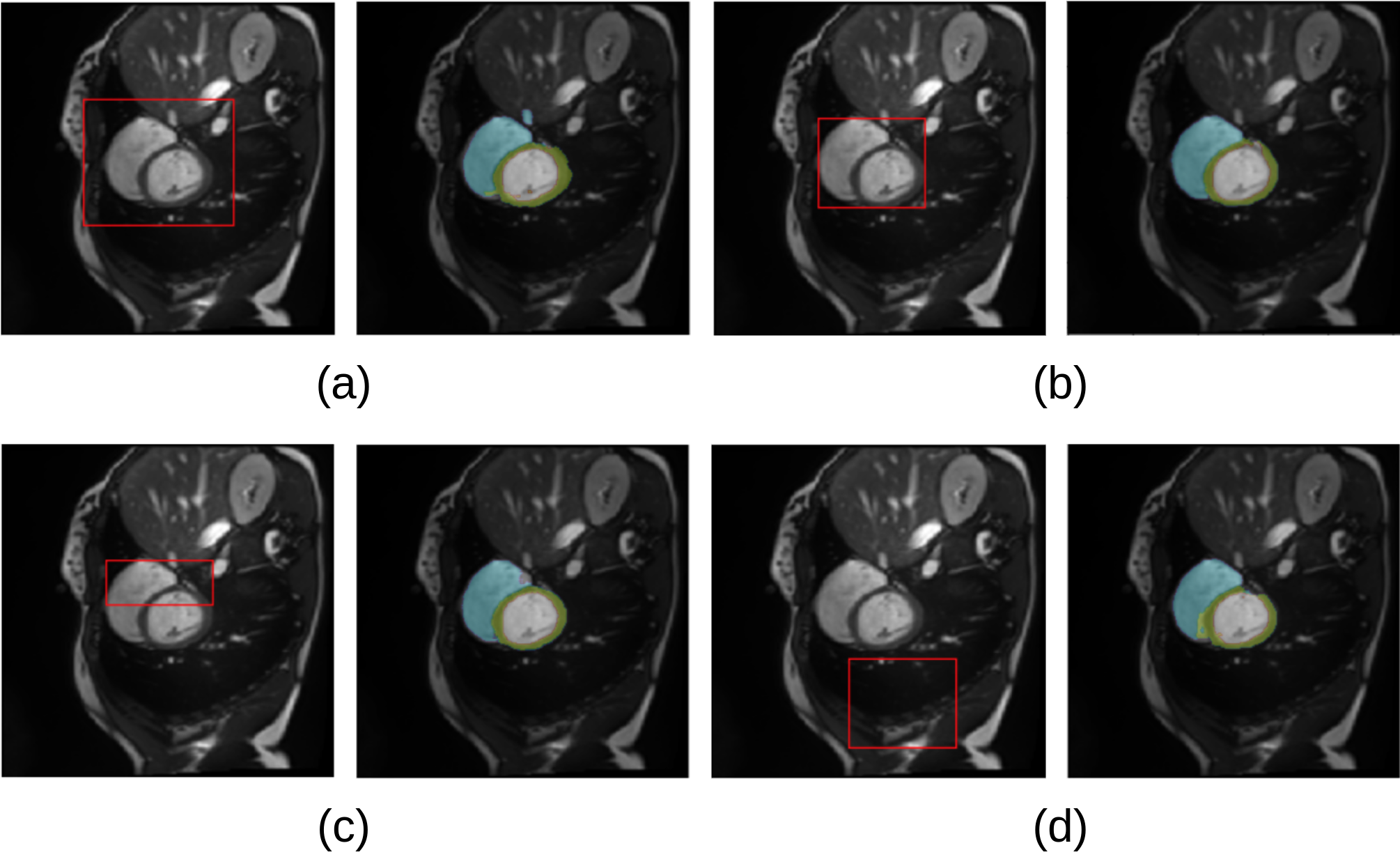}
    \caption{Qualitative results of {\it PrATo} under different prompting conditions. (a) Oversized, (b) tight-boxed, (c) partial and (d) misleading prompts.}
    \label{fig:prompt_abl}
\end{figure}

\begin{table}[t]
\centering
\label{table:prompt_abl}
\caption{Quantitative analysis of {\it PrATo} under different prompting conditions on the {\it ACDC} dataset. The best results are marked in bold.}
\resizebox{0.7\linewidth}{!}{
\begin{tabular}{|c|c|c|c|}
\hline
\textbf{Approach}           & \textbf{\textit{mDSC}}            & \textbf{\textit{mIoU}}            & \textbf{\textit{mHD95}}          \\ \hline
Tight Prompts      & \textbf{0.7169} & \textbf{0.6129} & 19.87          \\ \hline
Oversized Prompts  & 0.6571          & 0.552           & 21.43          \\ \hline
Partial Prompts    & 0.675           & 0.5763          & \textbf{17.16} \\ \hline
Misleading Prompts & 0.6528          & 0.5489          & 19.31         \\ \hline
\end{tabular}}
\end{table}

The results of the robustness analysis of the proposed framework under erroneous prompting conditions are presented in Table \ref{table:prompt_abl}. The performance of {\it PrATo} was analyzed across varying scenarios, {\it viz.}, tight, oversized, partial and misleading (incorrectly placed box) prompts. The quantitative results suggest that {\it PrATo} exhibits graceful degradation when the box prompts are placed incorrectly. The partial prompts show significant robustness with minimal drop in the mean values of $DSC$ and $IoU$. This is because partial prompts still provide high-quality features related to a portion of the target structure. The global context learned from the self-attention mechanism of ViT helps to accurately predict the remaining regions. Oversized prompts include background information along with the target region, resulting in over-segmentation, as evidenced in Fig. \ref{fig:prompt_abl}(a). The misleading prompts guide the attention of the framework away from the target region, thereby leading to the lowest average values of $DSC$ and $IoU$. However, the segmentation model learns robust feature representations, strong enough to override the misleading spatial priors. This leads to a reasonable performance. Although there are several segmentation errors(Fig. \ref{fig:prompt_abl}(d)), the overall structure is largely preserved. Thus, {\it PrATo} can handle imperfect prompts common in interactive segmentation without collapsing the overall segmentation performance of the underlying segmentation model.  

\section{Conclusion} \label{concl}

This study introduces {\it PrATo}, an adaptive framework for token pruning in Vision Transformers (ViT), designed to decrease the computational resources required to process irrelevant tokens. Using spatial priors through box prompts enhances the retention of semantically relevant tokens related to the target anatomical structure, thus improving segmentation accuracy. The calculation of the entropy-guided similarity score, to identify tokens aligned with the spatial prior, ensures that the token selection process is data-driven. This allows robustness to variations within the different input images. The experimental results suggest that the framework offers highly accurate segmentation performance and faster inference speed than other approaches. This approach reduces computational cost while maintaining (or even improving) the segmentation performance of several ViT-based segmentation models. This demonstrates the applicability of the proposed framework across different architectures. {\it PrATo} shows robustness by demonstrating graceful degradation with inaccurate prompts. Thus, the proposed framework is highly relevant for automated medical image segmentation as it tackles the high computational overhead associated with ViTs. This makes them deployable in resource-constrained settings. 

The performance of {\it PrATo} is highly dependent on the quality of the box-prompts. Furthermore, the dynamic hard pruning mechanism of {\it PrATo} may discard fine-grained features related to the boundaries of the anatomical structures. The proposed framework can be improved by integrating fine-grained prompts, such as points and scribbles, to enhance the boundary delineation capacity while preserving the efficient segmentation performance across a wide range of medical image analysis tasks and modalities.

\section{Acknowledgments}
\label{sec:acknowledgments}
This work was supported by the J. C. Bose National Fellowship, grant no. JCB/2020/000033 of S. Mitra. The authors acknowledge the use of computing resources provided by IDEAS-TIH, ISI Kolkata, India.

\bibliographystyle{ieeetr}
\bibliography{bibliography}

\end{document}